\documentclass[sigconf]{acmart}
\AtBeginDocument{%
  }

\usepackage{booktabs}
\usepackage{multirow}

\usepackage[normalem]{ulem}
\useunder{\uline}{\ul}{}
\usepackage[ruled,noline]{algorithm2e}
\SetKwInput{KwModel}{Model}
\SetKwInput{KwRequire}{Require}
\SetKwInput{KwLoss}{Loss}

\usepackage{amsthm}

\newtheorem{theorem}{Theorem}

\newtheorem{assumption}{Assumption}[section]
\newtheorem{definition}{Definition}[section]

\usepackage{svg}
\usepackage{bbm}
\usepackage{graphicx}
\usepackage{adjustbox}

\newcommand{\indep}{\perp \!\!\! \perp}

\usepackage{amsmath}

\usepackage{natbib}

\setcopyright{cc}
\copyrightyear{2026}
\acmYear{2026}
\setcopyright{cc}
\setcctype{by}
\acmConference[CIKM '26]{Proceedings of the 35th ACM International Conference on Information and Knowledge Management}{November 07--11, 2026}{Rome, Italy}
\acmBooktitle{Proceedings of the 35th ACM International Conference on Information and Knowledge Management (CIKM '26), November 07--11, 2026, Rome, Italy}
\acmDOI{10.1145/3799682.3840587}
\acmISBN{979-8-4007-2539-5/2026/11}

\begin{document}

\title{A Deep Learning Framework for Quantile Demand Forecasting in E-commerce with Doubly Censored Data}

\author{Ziwen Ye}
\authornote{This work was done while the author was at Meituan.}
\email{guziyzw@gmail.com}
\orcid{0009-0005-6908-944X}
\affiliation{%
  \institution{Meituan}
  \state{Beijing}
  \country{China}
}

\author{Jiawen Wei}
\authornote{corresponding Author.}
\email{weijiawen@meituan.com}
\affiliation{%
  \institution{Meituan}
  \state{Beijing}
  \country{China}
}

\author{Teng Song}
\email{songteng02@meituan.com}
\affiliation{%
  \institution{Meituan}
  \state{Beijing}
  \country{China}
}
  
\author{Chen Chen}
\email{chenchen11@meituan.com}
\affiliation{%
  \institution{Meituan}
  \state{Beijing}
  \country{China}
  }
\author{Guangrui Ma}
\email{magr@connect.ust.hk}
\affiliation{%
  \institution{Meituan}
  \state{Beijing}
  \country{China}
}

\renewcommand{\shortauthors}{Ziwen Ye, Jiawen Wei, Teng Song, Chen Chen, and Guangrui Ma}

\begin{abstract}
In recent years, a growing number of online retail companies have embraced deep learning models for demand forecasting, leveraging their scalability in processing large-scale datasets. Accurate demand forecasts enable improved planning and allow operational flexibility to adjust sales accordingly. Moreover, quantile estimation has gained widespread application as it offers valuable uncertainty quantification, thereby enhancing the robustness of decision-making processes. However, limited research has addressed the effective quantification of uncertainty when target labels are ambiguous. In retail, actual sales can be capped by stock and boosted by temporary promotions, whereas the target is underlying demand. Such scenarios are characterized by double censoring and no deep-learning pipeline yet exists for such data. In this paper, we introduce a straightforward framework for learning quantiles from doubly-censored observations and propose a tailored evaluation metric. Our approach is supported by theoretical guarantees, and we empirically demonstrate that our metric correlates well with the oracle calibration measure. Experimental results show that our estimator surpasses existing baselines. Finally, inventory simulations and real-world applications confirm the practical utility of our method in retail operations.
\end{abstract}

\begin{CCSXML}
<ccs2012>
<concept>
<concept_id>10010147.10010257.10010293.10010294</concept_id>
<concept_desc>Computing methodologies~Neural networks</concept_desc>
<concept_significance>500</concept_significance>
</concept>
<concept>
<concept_id>10010147.10010257.10010258.10010259.10010264</concept_id>
<concept_desc>Computing methodologies~Supervised learning by regression</concept_desc>
<concept_significance>500</concept_significance>
</concept>
</ccs2012>
\end{CCSXML}

\ccsdesc[500]{Computing methodologies~Neural networks}
\ccsdesc[500]{Computing methodologies~Supervised learning by regression}

\keywords{Deep Learning, Inventory Management, Censored Demand, Quantile Regression}


\maketitle

\section{Introduction}

In supply chain management, quantile regression is highly valued for its distribution-free nature and ability to estimate uncertainty. Retail giants like JD and Amazon utilize deep learning-based quantile estimation for inventory optimization\citep{qi2023practical,park2022learning,petropoulos2022forecasting}. However, when sales data are censored, such as due to stockouts or promotions, quantile regression estimates can become biased. As illustrated in \textbf{Figure \ref{fig:Figure 1}}, right-censoring happens with perishable items, where low stock causes frequent stockouts and conceals true demand. On the flip side, left-censoring occurs with promotional products, where temporary promotions lead to observed sales surpassing actual demand. Daily necessities typically have sufficient stock and rarely go on promotion, so their sales data usually reflects the actual demand. When right and left censoring coexist in one dataset, it's called doubly censoring.

\begin{figure}[!htbp]
  \centering
\includegraphics[width=\columnwidth, scale = 0.5, keepaspectratio]{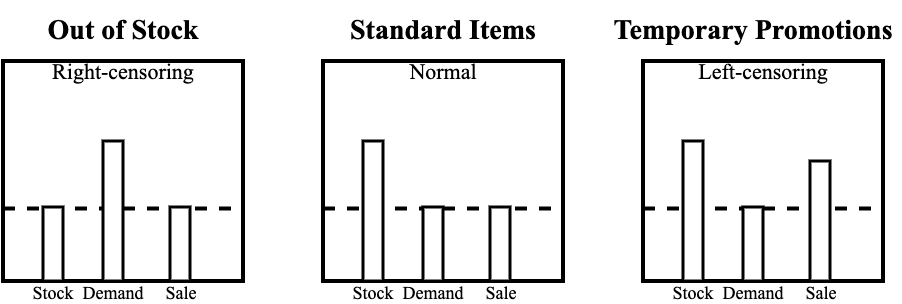}
  \caption{Censored data example.}
  \label{fig:Figure 1}
\end{figure}

Using or discarding doubly censored data both distort forecasts: the former fits the model on biased labels, the latter discards usable information\citep{zhong2021deep}. In practice, retailers often revise sales using ad hoc business rules, yet without true demand they cannot verify the accuracy or feed the revised data back into the forecasting pipeline. Consequently, demand forecasting still lacks an end-to-end training method for doubly censored data and an appropriate metric to evaluate models under this scenario. Survival analysis offers a direct parallel: only a lower bound on lifetime is observed, producing right-censored responses. Re-weighting techniques developed for this setting motivate our unified framework, which trains any deep network end-to-end with the Doubly Portnoy Estimator (DPE). Likewise, while quantile forecasts are usually judged by calibration and sharpness, no calibration test exists for doubly censored data. Borrowing again from survival methodology, we propose Extended D-Calibration to assess the calibration of competing models under double censoring.

In conclusion, this paper presents a unified framework for training and evaluating deep models under doubly censored data. Although motivated by e-commerce demand forecasting, the framework is domain-agnostic and can be extended to any setting with similar censoring patterns. Our contributions are as follows.
\begin{itemize}
    \item We introduce an end-to-end algorithm that trains deep models with the DPE, and provide the first formal proof of its properness under doubly censored data, closing a long-standing theoretical gap in quantile estimation with incomplete observations.
    \item To evaluate models trained under such censoring, we propose Extended D-Calibration, a new metric for doubly censored data, and prove that it precisely gauges the gap between modelled and true probabilities under mild assumptions.
    \item We first validate the proposed metric and algorithm through simulations, then confirm their effectiveness on two large-scale real-world datasets. Inventory simulations further demonstrate their superiority in stock control, and online A/B tests verify their practical impact.
\end{itemize}

\section{Related Work}

\subsection{Probabilistic Demand Forecasting}

Probabilistic demand forecasting plays a crucial role in inventory management across various sectors. A widely studied problem in this domain is the newsvendor problem, which identifies a specific quantile of the demand distribution as the optimal inventory level\citep{ban2019big,beutel2012safety}. Traditional approaches for estimating demand quantiles typically rely on deterministic forecasts and the historical residuals of predicted values\citep{levi2015data}. With the rise of deep learning, neural network-based demand forecasting has become increasingly prevalent. For example, to capture uncertainty, DeepAR employs a Recurrent Neural Network (RNN) to generate hidden states and estimates distribution parameters through maximum likelihood estimation\citep{salinas2020deepar}. This  approach requires assumptions about the underlying data distribution.

In contrast, quantile regression offers a non-parametric approach for probabilistic forecasting that does not require distributional assumptions\citep{koenker1978regression}. Building on this foundation, Wen, Lim, and other researchers\citep{wen2017multi,lim2021temporal} have designed Seq2Seq models that directly estimate quantiles using quantile loss as the loss function. However, these models typically provide predictions for a fixed set of quantiles. To enable arbitrary quantile forecasting, a spline-based neural network have been proposed\citep{gasthaus2019probabilistic,park2022learning,taieb2022learning,brando2022deep}. These networks leverage the flexibility of splines to interpolate between quantiles, thereby allowing for more granular probabilistic forecasts.

In the theoretical analysis of probabilistic forecasting, the theory of scoring rules serves as a fundamental framework. A loss function that qualifies as a proper scoring rule ensures that model training with this function can elicit the true underlying distribution\citep{gneiting2007strictly}. To evaluate the effectiveness of different proper scoring rules, recent studies have widely adopted calibration and sharpness as key evaluation criteria\citep{chung2021beyond,taieb2022learning,feldman2021improving}. Our study also employs these metrics to evaluate model performance.


\subsection{Censored Quantile Regression}
As highlighted in the introduction, censored data remain largely overlooked in demand forecasting. To exploit such data, we draw upon survival analysis, a field with an extensive literature on censoring (see \citep{wang2019machine,peng2021quantile} for a comprehensive survey). 

Deep learning has recently been combined with survival analysis under right censoring \citep{zhong2021deep,katzman2018deepsurv}.
Some studies use censored data during training \citep{avati2020countdown,goldstein2020x}. In the statistical literature, doubly censored data have been studied via quantile regression \citep{lin2012quantile,ji2012quantile,peng2021quantile}, but few works link censored quantile regression to deep models.
The \textsf{q-rnn} package trains networks with left censoring \citep{Cannon2018}.
Pearce et al.\ extend Portnoy's Estimator to train deep models via an iterative re-weighting algorithm \citep{pearce2022censored}.
Yanagisawa shows that Portnoy's Estimator is proper, so the true distribution is recovered by optimization \citep{yanagisawa2023proper}.


On the evaluation side, the C-index \citep{harrell1982evaluating} quantifies median discrimination only, whereas D-Calibration \citep{haider2020effective} assesses calibration under right censoring. Neither extends to doubly censored data. We close this gap by introducing Extended D-Calibration, the first metric to evaluate calibration in the presence of both left and right censoring.

\section{Preliminaries}
\subsection{Notation}
We consider the task of using sequential features $x \in \mathcal{X}$ to learn the quantile function of demand $y \in \mathcal{Y}$ with a censoring indicator $\Delta$. Given a dataset $\mathcal{D} = \{(x_1, y_1^{*}, \Delta_1), ..., (x_N, y_N^{*}, \Delta_N) \}$, $y^{*}$ denotes the observed demand and $y$ denotes the true demand. When no censoring occurs, $y^* = y$ and we set $\Delta = 0$. Let $R$ and $L$ be the right-censored demand and left-censored demand, respectively. In the right-censoring scenarios, we can only observe $R$, where we denote $\Delta$ as 1. Similarly, $\Delta$ is denoted as 2 for left-censoring scenarios, where $L$ is observed. When no censoring occurs, the $\tau$-quantile of $y$ conditional on $x$ is obtained by:
\begin{align*}
y_\tau = F^{-1}(\tau|x) = \underset{z \in R}{\mathrm{argmin}} \mathop{\mathbb{E}}_{Y \sim F}[\rho_\tau(y, z)]
\end{align*}
where $\rho_\tau(y, z) = (\tau - \mathbbm{1}[y < z])(y - z)$ is the quantile loss, $\tau \in (0, 1)$ is the quantile level and $\mathbbm{1}$ is the indicator function\citep{koenker1978regression}. Quantile loss has been shown to be a proper scoring rule\citep{gneiting2007strictly}. The formal definition of the proper scoring rule under censored data is given as follows\citep{yanagisawa2023proper},
\begin{definition}
    A scoring rule $ S(\hat{F}, (y, \Delta))$ is proper if 
    \begin{align*}
        \mathop{\mathbb{E}}_{y \sim Y}[S(\hat{F}, (y, \Delta))] \ge \mathop{\mathbb{E}}_{y \sim Y}[S(F, (y, \Delta))].
    \end{align*}
\end{definition}
Where $F$ is the cumulative distribution function (CDF) and $\hat{F}$ is the estimation of $F$. In probabilistic forecasting, training with a proper scoring rule is standard: if the loss is not proper, the fitted distribution $\hat{F}$ may deviate arbitrarily from the true $F$.

Unless stated otherwise, we adopt the standard survival-analysis assumption that censoring is independent of the true demand conditional on the features. This assumption can be expressed as follows:
\begin{assumption}\label{asu:asu1}
    [Independent censoring] $(L,R) \indep y|x$
\end{assumption}

While unobserved confounders may violate this in practice, 
our covariates (e.g., price, promotions) absorb the main 
sources of endogeneity.

\subsection{Evaluation}

Calibration and sharpness are the two canonical criteria for evaluating probabilistic forecasts. For calibration, we first examine the uncensored case. To check whether the model is calibrated on an interval $I = [\tau_k, \tau_{k+1}]$, we simply count how many observations fall into that interval: $N_k = \sum\limits_{i=1}^{N}\mathbbm{1}(\tau_{iy} \in I)$, where $\tau_{iy} = \hat{F}(y_i|x_i)$ is the cumulative probability of the $i$-th observation. If $N_k / N$ equals $\tau_{k+1} - \tau_k$, the model is calibrated on $I$.  The overall calibration error of the CDF is then measured by
\begin{align*}
\text{Calibration}= \sum_{k=1}^{K-1}\bigl(\tau_{k+1}-\tau_k-\tfrac{N_k}{N}\bigr)^2.
\end{align*}

Under right-censoring, we cannot determine whether a censored observation truly falls inside $I$. Haider et al.\citep{haider2020effective} therefore re-weights each right-censored observation by its distance to the interval, using the conditional probability mass:
\begin{align*}
& N^{R}_k = \sum\limits_{i=1}^{N} \{\underbrace{\mathbbm{1}(\Delta_i =0)\mathbbm{1}(\tau_{iy} \in I)}_{\textit{uncensored observation}} \\
& + \underbrace{\mathbbm{1}(\Delta_i = 1)[\dfrac{\tau_{k+1} - \tau_{iR}}{1 - \tau_{iR}}\mathbbm{1}(\tau_{iR} \in I) + \dfrac{\tau_{k+1}- \tau_{k}}{1 - \tau_{iR}}\mathbbm{1}(\tau_{iR} < \tau_k)]}_{\textit{right-censored observation}}\} 
\end{align*}

and D-Calibration is then
\begin{align*}
\text{D-Calibration}= \sum_{k=1}^{K-1}\bigl(\tau_{k+1}-\tau_k-\tfrac{N^{R}_k}{N}\bigr)^2.
\end{align*}

Yet calibration alone does not suffice to judge forecast quality \citep{gneiting2007strictly}. When two models exhibit equal calibration, the one with the more concentrated predictive distribution is preferred. Sharpness, a property of the forecasts alone, is measured by the width of the central predictive interval. For quantiles $\tau_1$ and $\tau_2$ with $\tau_1 < \tau_2$, sharpness is defined as  
\begin{align*}
\delta = \hat{F}^{-1}(\tau_2) - \hat{F}^{-1}(\tau_1).
\end{align*}
where $\hat{F}$ denotes the estimated CDF.

\section{Methodology}
\subsection{Quantile Modeling}
In simulations we use an MLP encoder to map features into embedding representations $\mathbf{E}$. For real-world data, demand patterns are more intricate. We therefore adopt a Transformer encoder that processes historical sales, prices and auxiliary variables. It is worth noting that alternative encoders such as Temporal Fusion Transformer\citep{lim2021temporal}, PatchTST\citep{nie2022time} and TSMixer\citep{chen2023tsmixer} can replace the Transformer as needed. To enable arbitrary-quantile estimation, we model the inverse CDF as a piecewise linear function. The unit interval $[0,1)$ is split into $K$ equal segments by $K+1$ knots $\{(\tau_k, \hat{y}_k)\}_{k=1}^{K+1}$. A neural network takes $\mathbf{E}$ as input and outputs a $K$-dimensional vector $\theta$. After a softplus activation, $\theta$ is transformed into the quantile values $\{\hat{y}_{k}\}_{k=1}^{K}$. For any value $y$, we identify the interval $I_q=[\hat{y}_k,\hat{y}_{k+1}]$ such that $y \in I_q$ and compute its quantile as
\begin{align*}
\hat{\tau}_y &= (1 - w_y)\tau_k + w_y\tau_{k+1}, \\
w_y &= \frac{y - \hat{y}_{k}}{\hat{y}_{k+1} - \hat{y}_{k}}.
\end{align*}
We name this approach \textit{q\_search} and use it throughout the remainder of the paper.

\subsection{Training}

We employ DPE to estimate the quantile function under doubly censored data. For a given quantile level $\tau \in (0,1)$, the DPE loss for predicted quantile $y_\tau$ is defined as:
\begin{align*}
\begin{split}
&S_{DPE}(y^{*}, {y}_\tau, w_l, w_r, \tau, \Delta)  = \underbrace{ \mathbbm{1}(\Delta = 0) \rho_\tau({y}_\tau, y)}_{\textit{Loss for the uncensored scenario}} \\
        &  +\underbrace{ \mathbbm{1}(\Delta = 1)[w_r\rho_\tau({y}_\tau, R) + (1 - w_r)\rho_\tau({y}_\tau, z_{\infty})]}_{\textit{Loss for the right censoring scenario}} \\
        &  +\underbrace{ \mathbbm{1}(\Delta = 2)[w_l\rho_\tau({y}_\tau, L) + (1 - w_l)\rho_\tau({y}_\tau, z_{-\infty})]}_{\textit{Loss for the left censoring scenario}}
\end{split}
\end{align*}
where $z_\infty$ and $z_{-\infty}$ are sufficiently large positive and negative constants, respectively, and $w_l, w_r \in [0,1]$ are redistribution weights.

To set the weights, we examine the indicator function $\mathbbm{1}(y < y_\tau)$ in the quantile loss under censoring. For left-censoring, the observed value $L$ serves as an upper bound on the true demand ($y \leq L$). Thus, if $y_\tau > L$, we have $y \leq L < y_\tau$ with certainty, implying $\mathbbm{1}(y < y_\tau) = 1$ and setting $w_l = 1$. Conversely, if $y_\tau < L$, the indicator is ambiguous because the true $y$ may lie on either side of $y_\tau$. Following Efron's \cite{wang2009locally} redistribution principle, we impute the conditional expectation:
\begin{align*}
w_l = \mathbbm{E}[\mathbbm{1}(y > y_\tau) \mid y < L] = \frac{P(y_\tau < y < L)}{P(y < L)} = \frac{\tau_L - \tau}{\tau_L},
\end{align*}
where $\tau = F(y_\tau)$ and $\tau_L = F(L)$ are the cumulative probabilities under the true demand CDF $F$. The complementary weight $(1-w_l)$ is redistributed to $z_{-\infty}$, representing mass below the left-censoring threshold. The right-censoring weight $w_r$ is derived analogously:
\begin{align*}
w_r = \mathbb{E}[\mathbbm{1}(y < y_\tau) \mid y > R] = \frac{P(R < y < y_\tau)}{P(y > R)} = \frac{\tau - \tau_R}{1 - \tau_R}.
\end{align*}

Under the independent censoring assumption, and with $w_l, w_r$ defined with respect to the true CDF $F$, DPE induces a proper scoring rule for $\tau$-quantile estimation. This result, formalized in Theorem~\ref{thm:dpe-proper}, is to our knowledge the first properness guarantee for a quantile loss accommodating both left and right censoring.

\begin{theorem}
\label{thm:dpe-proper}
Under the independent censoring assumption, if $w_l$ and $w_r$ are correct, Doubly Portnoy's Estimator is a proper scoring rule.
\end{theorem}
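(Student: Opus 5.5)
We fix $x$ and a quantile level $\tau$, and show that the expected DPE loss, viewed as a function of the candidate quantile $z = y_\tau$, is minimized at the true quantile $q_\tau = F^{-1}(\tau\mid x)$. This is the pinball-loss version of properness in the Definition: the expected score under $\hat F$ is at least that under $F$. The weights $w_l, w_r$ are taken as the ``correct'' ones, computed from the true $F$ at the true quantile level $\tau$. This mirrors Yanagisawa's argument for Portnoy's estimator \citep{yanagisawa2023proper}, extended to the left-censored term.

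\textbf{Decomposition.} Under Assumption~\ref{asu:asu1}, we condition on the censoring thresholds $(L,R)$ and on $\Delta$. The expected loss then splits into three terms, one for each of the events $\Delta=0,1,2$. Since $\rho_\tau$ is convex and piecewise linear in $z$, it suffices to compute the subgradient of the expected loss in $z$. We then verify two things: the subgradient contains $0$ at $z=q_\tau$, and it is nonpositive for $z<q_\tau$ and nonnegative for $z>q_\tau$.

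\textbf{Subgradient of each term.} The derivative of $\rho_\tau(z,v)$ in $z$ is $\mathbbm{1}(v<z)-\tau$.
\begin{itemize}
\item For uncensored observations ($\Delta=0$), the contribution is $\mathbb{E}[\mathbbm{1}(y<z)-\tau;\ \Delta=0]$.
\item For a right-censored observation at $R$ ($\Delta=1$), the term $\rho_\tau(z,z_\infty)$ has derivative $-\tau$, because $z<z_\infty$. The contribution is therefore $w_r\,\mathbbm{1}(R<z)-\tau$.
\begin{itemize}
\item If $q_\tau\le R$, the correct weight is $w_r=1$ (the mass is kept at $R$), matching the fact that $y>R\ge q_\tau$.
\item If $q_\tau>R$, we have $w_r=(\tau-\tau_R)/(1-\tau_R)=P(y<q_\tau\mid y>R)$. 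Using independence, this equals $\mathbb{E}[\mathbbm{1}(y<q_\tau)\mid y>R, R]$.
\end{itemize}
\item For a left-censored observation at $L$ ($\Delta=2$), $\rho_\tau(z,z_{-\infty})$ has derivative $1-\tau$. The contribution is $w_l\,\mathbbm{1}(L<z)+(1-w_l)-\tau$. With $w_l=1$ when $q_\tau>L$, and $1-w_l=\tau/\tau_L=P(y<q_\tau\mid y<L)$ otherwise, this again equals $\mathbb{E}[\mathbbm{1}(y<q_\tau)\mid \text{censoring info}]-\tau$ at $z=q_\tau$.
\end{itemize}

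\textbf{Summing at the true quantile.} Adding the three terms and applying the tower property with independent censoring, the subgradient at $z=q_\tau$ becomes $\mathbb{E}[\mathbbm{1}(y<q_\tau)]-\tau=F(q_\tau)-\tau=0$. So $q_\tau$ is a stationary point.

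\textbf{Main obstacle: global optimality.} The hard part is showing that $q_\tau$ is a global minimizer, because the weights are held fixed at their true values while $z$ varies. With the weights fixed, however, each term's subgradient is nondecreasing in $z$:
\begin{itemize}
\item $\mathbbm{1}(y<z)$, $\mathbbm{1}(R<z)$ and $\mathbbm{1}(L<z)$ are all nondecreasing in $z$;
\item the constants $-\tau$ and $1-w_l-\tau$ do not depend on $z$;
\item $w_r, w_l \ge 0$.
\end{itemize}
The expected loss is therefore convex in $z$, and a point with zero subgradient is a global minimizer. The remaining care is the boundary cases $L=q_\tau$ and $R=q_\tau$ and ties. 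These are measure-zero if $F$ is continuous, and otherwise are handled by the subdifferential inclusion.

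Properness then follows. Replacing $F^{-1}(\tau)$ by $\hat F^{-1}(\tau)$ cannot decrease the expected score, and integrating over $\tau$ (for the piecewise-linear quantile model) preserves the inequality.
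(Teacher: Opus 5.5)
Your argument is correct, but it is organized quite differently from the paper's.

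\textbf{The paper's route.} The paper fixes $(L,R)$ and enumerates nine configurations of $\hat y_\tau$ and $y_\tau$ relative to $[L,R]$.
\begin{itemize}
\item When both lie beyond the same threshold (Scenarios 1 and 9), it computes the expected loss in closed form. It then shows the coefficient of $\hat y_\tau$ vanishes after substituting $w_l=(\tau_L-\tau)/\tau_L$ or $w_r=(\tau-\tau_R)/(1-\tau_R)$.
\item In the other scenarios, it bounds the loss at $\hat y_\tau$ from below, piece by piece, using the slopes appropriate to $y_\tau$. It notes that this bound has zero expected slope in $\hat y_\tau$ and identifies it with $\mathbb{E}[S_{DPE}(y_\tau)]$. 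Several scenarios are handled only ``by symmetry.''
\end{itemize}
Those piecewise bounds are in effect the subgradient inequality at $y_\tau$. So the underlying mechanism is the one you isolate. With the weights frozen at their true values, the expected loss is convex in the candidate. The weights are exactly the conditional probabilities $P(y<q_\tau\mid\text{censoring info})$ that make the expected subgradient at $q_\tau$ equal $F(q_\tau)-\tau=0$.

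\textbf{What each approach buys.} Your version gives:
\begin{itemize}
\item a single uniform argument with no case split on $\hat y_\tau$;
\item an explicit reason why the weights must be held fixed (convexity);
\item a clean passage from fixed to random thresholds via the tower property;
\item as a by-product, the observation that the weights are irrelevant on the event $q_\tau\in(L,R)$.
\end{itemize}
The paper's route instead gives explicit expected-loss formulas for each configuration. Those formulas show that, when $y_\tau<L$, the expected loss is flat in $\hat y_\tau$ on $\{\hat y_\tau<L\}$. So for fixed thresholds only weak properness holds, and the ``equality iff $\hat y_\tau=y_\tau$'' claim in the paper's summary is too strong. Your conclusion (a minimizer, not a unique one) is exactly what the Definition requires.

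\textbf{Two small repairs.}
\begin{itemize}
\item Add a sentence justifying the interchange of expectation and subdifferential. Each per-sample loss is Lipschitz in the candidate with constant $\max\{\tau,1-\tau\}$, so dominated convergence on the convex difference quotients suffices.
\item The events $\{L=q_\tau\}$ and $\{R=q_\tau\}$ are null only if the law of $(L,R)$ has no atoms there, and continuity of $F$ does not guarantee that. Your fallback is nonetheless right: conditionally on any $(L,R)$, the subdifferential at $q_\tau$ contains $0$ under either boundary convention for the weights.
\end{itemize}
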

\begin{proof}
We give a proof in Appendix A.
\end{proof}

Based on the weighting strategy described above, Theorem~\ref{thm:dpe-proper} establishes that DPE is a proper scoring rule for the target quantiles when the redistribution weights $w_l$ and $w_r$ are specified with respect to the true demand distribution. In practice, however, the cumulative probabilities $\tau_L = F(L)$ and $\tau_R = F(R)$ at the censoring thresholds are unknown and must be estimated from data.

\begin{algorithm}[!htbp]
\caption{Iterative Re-weighting for DPE Training}
\label{alg:one}
\KwData{Batch data $B = \{(x_j, y_j^*, \Delta_j)\}_{j=1}^b$}
\KwModel{Deep quantile model $\mathcal{M}_\theta$ with $K$ knots $\{\tau_i\}_{i=1}^K$}
\KwRequire{Target quantile $\tau$, learning rate $\alpha$, convergence tolerance $\epsilon$}
\While{$\max_j |w_{r,j}^{(t)} - w_{r,j}^{(t-1)}| > \epsilon$ \textbf{or} $\max_j |w_{l,j}^{(t)} - w_{l,j}^{(t-1)}| > \epsilon$}{
    \tcp{E-step: estimate redistribution weights (stop-gradient)}
    \For{$j = 1, \dots, b$}{
        $\{\hat{y}_{j,\tau_i}\}_{i=1}^K \leftarrow \mathcal{M}_{\theta^{(t-1)}}(x_j)$\;
        $\{\tilde{y}_{j,\tau_i}\}_{i=1}^K \leftarrow \text{SG}(\{\hat{y}_{j,\tau_i}\}_{i=1}^K)$\;
        \uIf{$\Delta_j = 0$}{
            $w_{r,j}^{(t)} \leftarrow 1$; $w_{l,j}^{(t)} \leftarrow 1$\;
        }
        \uElseIf{$\Delta_j = 1$}{
            $\hat{\tau}_R \leftarrow q\_search(\{\tilde{y}_{j,\tau_i}\}_{i=1}^K, R_j)$\;
            \uIf{$\hat{\tau}_R \leq \tau$}{
                $w_{r,j}^{(t)} \leftarrow (\tau - \hat{\tau}_R) / (1 - \hat{\tau}_R)$\;
            }
            \Else{
                $w_{r,j}^{(t)} \leftarrow 1$\;
            }
            $w_{l,j}^{(t)} \leftarrow 1$\;
        }
        \uElseIf{$\Delta_j = 2$}{
            $\hat{\tau}_L \leftarrow q\_search(\{\tilde{y}_{j,\tau_i}\}_{i=1}^K, L_j)$\;
            \uIf{$\tau \leq \hat{\tau}_L$}{
                $w_{l,j}^{(t)} \leftarrow (\hat{\tau}_L - \tau) / \hat{\tau}_L$\;
            }
            \Else{
                $w_{l,j}^{(t)} \leftarrow 1$\;
            }
            $w_{r,j}^{(t)} \leftarrow 1$\;
        }
    }
    \tcp{M-step: partial maximization via gradient descent}
    $L(\theta) \leftarrow \sum_{j=1}^b S_{DPE}(y_j^*, \hat{y}_{j,\tau}, w_{l,j}^{(t)}, w_{r,j}^{(t)}, \tau, \Delta_j)$\;
    $\theta^{(t)} \leftarrow \text{OptimizerUpdate}(\theta^{(t-1)}, \nabla_\theta L, \alpha)$\;
}
\end{algorithm}

To implement DPE in practice, we extend the iterative re-weighting (IR) procedure of \citet{pearce2022censored} from the right-censored survival setting to the doubly censored demand forecasting scenario. We extend this procedure to doubly censored demand forecasting and adapt it for deep neural networks, as summarized in Algorithm~\ref{alg:one}.

\textbf{Theoretical justification.} 
We now show that Algorithm~\ref{alg:one} admits a principled probabilistic interpretation, extending the framework of \citet{pearce2022censored} to the doubly censored setting. Following Pearce et al., we assume that the likelihood for each uncensored datapoint at quantile level $\tau$ follows an asymmetric Laplace distribution (ALD) with location $\hat{y}_{j,\tau}$ and scale $\lambda = \sqrt{\tau(1-\tau)}$. Under this assumption, the log-density corresponds to the negative check loss up to an additive constant \citep[Eq.~16]{pearce2022censored}:
\begin{align*}
\log f(y_j; \hat{y}_{j,\tau}, \lambda) = -\rho_\tau(y_j, \hat{y}_{j,\tau}) + \text{const}.
\end{align*}

For censored observations, we define the weighted likelihood by redistributing probability mass according to the current weight estimates. For a right-censored sample $j$ with censoring time $R_j$, the contribution is:
\begin{align*}
f_{\text{weighted}}(y_j; \hat{y}_{j,\tau}, w_{r,j}) = f(R_j; \hat{y}_{j,\tau})^{w_{r,j}} \cdot f(z_\infty; \hat{y}_{j,\tau})^{1-w_{r,j}},
\end{align*}
where $z_\infty$ is a sufficiently large constant representing mass beyond the censoring threshold. The left-censored case is defined analogously. Combining all samples, the joint weighted likelihood is:
\begin{align*}
p(\mathbf{y}|\mathbf{X}, \theta, \mathbf{w}_l, \mathbf{w}_r, \tau) = &\prod_{i: \Delta_i=0} f(y_i; \hat{y}_{i,\tau}) \\
&\times \prod_{j: \Delta_j=1} f_{\text{weighted}}(y_j; \hat{y}_{j,\tau}, w_{r,j}) \\
&\times \prod_{k: \Delta_k=2} f_{\text{weighted}}(y_k; \hat{y}_{k,\tau}, w_{l,k}).
\end{align*}

Taking the negative log-likelihood and substituting the ALD density recovers exactly the DPE loss $S_{DPE}$ (up to an additive constant). Thus, minimizing the DPE loss is equivalent to maximizing the weighted ALD likelihood. This reveals that Algorithm~\ref{alg:one} can be interpreted as a \textbf{generalized EM algorithm} \citep{pearce2022censored}: the E-step computes the redistribution weights by evaluating the conditional expectations under the current model, and the M-step performs a partial maximization of the expected complete-data log-likelihood via gradient descent. This probabilistic foundation elevates the iterative re-weighting strategy above heuristic reasoning, ensuring that each iteration yields a well-defined statistical objective.
Although weights are approximated from the current model, the 
generalized EM procedure converges to a stationary point 
under mild conditions~\cite{wu1983convergence}.


\subsection{Evaluation}

\begin{figure}[!htbp]
  \centering
  \includegraphics[width=\linewidth]{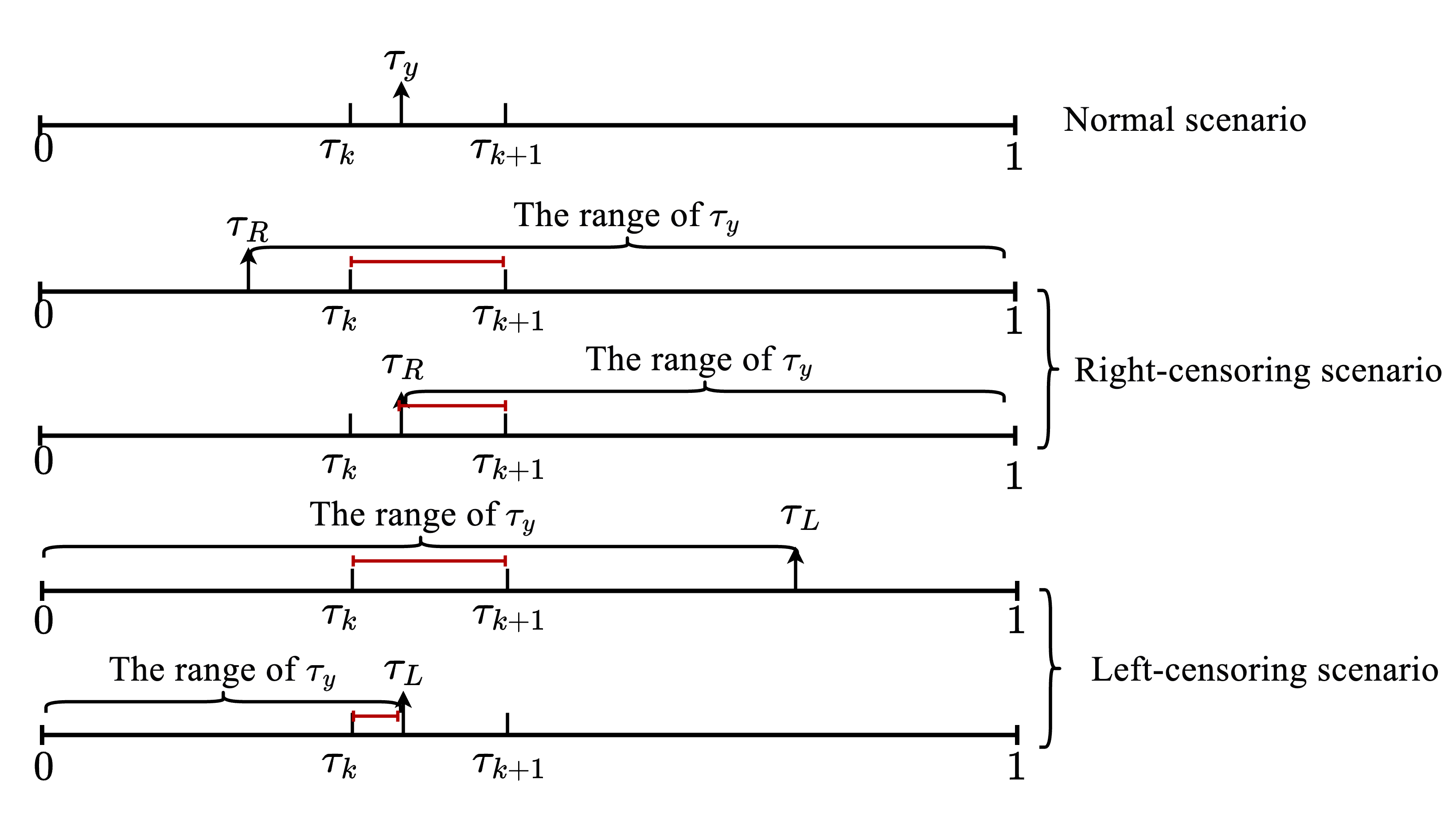}
  \caption{The illustration of re-weighting strategy.}
  \label{fig:Figure 3}
\end{figure}
Having established a training objective for doubly censored data, we now turn to evaluation. Standard calibration metrics assume fully observed outcomes and thus fail under censoring. We therefore develop Extended D-Calibration, which generalizes the re-weighting principle of DPE to assessment.

For uncensored observations, membership of $\tau_y$ in $I=[\tau_k,\tau_{k+1}]$ is certain, so calibration is computed directly.  
When data are doubly censored, ambiguity arises as soon as $\tau_L$ or $\tau_R$ lies inside $I$, because the true $\tau_y$ may still fall outside this interval. As illustrated in \textbf{Figure \ref{fig:Figure 3}}, when $\tau_R$ falls within the interval $I$, the true $\tau_y$ may occur at any point within $(\tau_R, 1)$. Thus, we design a weighting strategy for different observations. Specifically, we use the probability that $\tau_y$ falls within $I$ as their weights and incorporate these weights into the calibration calculation. Hence, the calculation of $N^{D}_k$ under doubly censored data is given by,

\begin{adjustbox}{max width=\linewidth}
$\displaystyle
\begin{aligned}
N^{D}_k = \sum_{i=1}^{N}\Bigl\{
&\underbrace{\mathbbm{1}(\Delta_i =0)\mathbbm{1}(\tau_{iy} \in I)}_{\text{uncensored}} \\
&+ \underbrace{\mathbbm{1}(\Delta_i = 1)\Bigl[
     \frac{\tau_{k+1} - \tau_{iR}}{1 - \tau_{iR}}\mathbbm{1}(\tau_{iR} \in I)
   + \frac{\tau_{k+1}- \tau_{k}}{1 - \tau_{iR}}\mathbbm{1}(\tau_{iR} < \tau_k)
   \Bigr]}_{\text{right-censored}} \\
&+ \underbrace{\mathbbm{1}(\Delta_i = 2)\Bigl[
     \frac{\tau_{iL} - \tau_k }{\tau_{iL}}\mathbbm{1}(\tau_{iL} \in I)
   + \frac{\tau_{k+1}- \tau_{k}}{\tau_{iL}}\mathbbm{1}(\tau_{iL} > \tau_{k+1})
   \Bigr]}_{\text{left-censored}}
\Bigr\}
\end{aligned}$
\end{adjustbox}
\vspace{0.5em}

Assume the unit interval $[0,1]$ is divided into $K$ non-overlapping sub-intervals. To assess the performance across these intervals, we propose a metric, Extended D-Calibration, defined as:
\begin{align*}
    \text{Extended D-Calibration} = \sum_{k = 1}^{K-1}(\tau_{k+1} - \tau_k - \dfrac{N^{D}_k}{N})^2
\end{align*}
Moreover, for the proposed $N^{D}_k$, we prove that the expectation value of $N^{D}_k/N$ equals $\tau_{k+1} - \tau_k$ under some conditions, thereby providing a theoretical guarantee for this weighting scheme. This is formalized in the following theorem:
\begin{theorem}
\label{thm:exp-dcal}
Under the independent censoring assumption, if the cumulative distribution function is strictly monotonically increasing, the expected proportion of doubly censored data in each bin equals the bin width, i.e., $\mathop{\mathbb{E}}[\dfrac{N^{D}_k}{N}] = \tau_{k+1} - \tau_{k} $
\end{theorem}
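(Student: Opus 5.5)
By linearity of expectation, $\mathbb{E}[N^D_k/N]$ is the average of the per-observation expected contributions. It therefore suffices to show that one generic observation $(x,y^*,\Delta)$ contributes exactly $\tau_{k+1}-\tau_k$ in expectation. Here the model CDF is taken to be the true $F(\cdot\mid x)$, so that $\tau_y=F(y\mid x)$, $\tau_R=F(R\mid x)$ and $\tau_L=F(L\mid x)$.

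The key preliminary fact is that, conditional on $x$, $U:=F(y\mid x)$ is uniform on $[0,1]$. This follows from the probability integral transform, and strict monotonicity of $F$ makes $F^{-1}$ well defined, so that $\{y\le R\}=\{U\le \tau_R\}$ and $\{y\ge L\}=\{U\ge\tau_L\}$. By Assumption~\ref{asu:asu1}, $U$ remains uniform after conditioning on $(L,R)$ as well as $x$. I will formalize the censoring mechanism as follows: $\Delta=1$ iff $y>R$ (we observe $R$), $\Delta=2$ iff $y<L$ (we observe $L$), and $\Delta=0$ otherwise. This requires $L\le R$ almost surely, so the three events partition the sample space.

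The calculation then proceeds conditionally on $(x,L,R)$, writing $a=\tau_L\le b=\tau_R$ and $U\sim\mathrm{Unif}[0,1]$. I will show that each censored weight equals the conditional probability that the latent $U$ lies in $I=[\tau_k,\tau_{k+1}]$ given the censoring event. For right censoring, given $U>b$, $U$ is uniform on $(b,1)$. Hence $P(U\in I\mid U>b)$ equals $(\tau_{k+1}-b)/(1-b)$ if $b\in I$, $(\tau_{k+1}-\tau_k)/(1-b)$ if $b<\tau_k$, and $0$ if $b>\tau_{k+1}$. These are exactly the right-censored terms in $N^D_k$. Symmetrically, given $U<a$, $U$ is uniform on $(0,a)$, and $P(U\in I\mid U<a)$ reproduces the left-censored terms $(a-\tau_k)/a$ and $(\tau_{k+1}-\tau_k)/a$.

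With these identities in hand, the expected contribution is
\begin{align*}
&P(U\in I,\,a\le U\le b)+P(U>b)\,P(U\in I\mid U>b)+P(U<a)\,P(U\in I\mid U<a)\\
&=P(U\in I,\,a\le U\le b)+P(U\in I,\,U>b)+P(U\in I,\,U<a)=P(U\in I)=\tau_{k+1}-\tau_k .
\end{align*}
Taking expectation over $(x,L,R)$ and averaging over $i$ then gives the claim. Boundary events such as $U=a$ or $U=b$ have probability zero because $U$ is continuous.

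I expect the main obstacle to be justifying that conditioning on the observed censoring event leaves $U$ uniform on the relevant sub-interval. This is where independent censoring is essential. The subtlety is that $\Delta$ depends on $y$, so we must first condition on $(L,R)$, which is independent of $y$ given $x$, and only then on the event $\{U>b\}$ or $\{U<a\}$. A related subtlety is the implicit requirement that the evaluated $\hat F$ equals the true $F$. I will state this explicitly, as the result is a consistency property of the metric at the true model.
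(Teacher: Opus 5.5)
Your proof is correct, but it is organized differently from the paper's. The paper follows the template of Haider et al.'s D-Calibration proof. It writes $\mathbb{E}[N^D_k/N]$ as five terms and splits each censored probability into two parts: a boundary piece (e.g.\ $(1-\tau_{k+1})P(\tau_R\in I)$, where $\tau_y$ lies above the bin) and an overlap piece where $\tau_y\in I$. It then shows that the overlap pieces plus the uncensored term collapse to $(\tau_{k+1}-\tau_k)\,P(R>y_{\tau_k},\,L<y_{\tau_{k+1}})$, and cancels the boundary pieces against terms generated by the weights. It concludes because $\{\tau_R<\tau_k\}$, $\{\tau_L>\tau_{k+1}\}$ and $\{\tau_R\ge\tau_k,\ \tau_L\le\tau_{k+1}\}$ partition the space of censoring thresholds.

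So the paper partitions on where the thresholds sit relative to the bin. You instead partition on where the latent $U$ sits relative to the thresholds ($U<a$, $a\le U\le b$, $U>b$). You first isolate the single key fact that every weight in $N^D_k$ equals $P(U\in I\mid \text{censoring event},x,L,R)$, and the tower property then gives $P(U\in I)=\tau_{k+1}-\tau_k$ at once. Your route has three advantages:
\begin{itemize}
\item it is shorter and pinpoints exactly where independent censoring is used;
\item it handles rigorously the dependence of the weights on the random $\tau_R,\tau_L$, which the paper's Step 1 treats informally by writing those weights outside the probabilities;
\item it makes plain that the same unbiasedness holds for any censoring scheme whose weights are conditional bin-membership probabilities.
\end{itemize}
The paper's bookkeeping mainly buys continuity with the original right-censored proof and an explicit accounting of how censored mass is redistributed across bins.

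List one more hypothesis alongside $\hat F=F$ and $L\le R$. Uniformity of $U=F(y\mid x)$ requires $F(\cdot\mid x)$ to be continuous, and strict monotonicity does not imply this, since a strictly increasing CDF can have atoms. Your closing remark that ``$U$ is continuous'' already assumes it, as does the paper via $\tau_y\sim U[0,1]$.
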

\begin{proof}
We give a proof in Appendix B.
\end{proof}

\section{Experiments}
We conduct extensive experiments to evaluate the proposed metric and algorithm on both simulated and real-world retail datasets.\footnote{Code and simulation scripts are included in the \href{https://github.com/ZiwenYeee/Framework4DoublyCensoredData}{\color{blue}{link}}.} Our three primary research objectives are:
\begin{itemize}
    \item How accurately does the proposed metric assess model calibration under doubly censored data compared with existing measures?
    \item Does the proposed method outperform alternative approaches in calibration and sharpness in both simulated and real-world scenarios?
    \item Does the method translate into operational gains, that is, superior inventory management benefits?
\end{itemize}
These three questions address the discriminative power and calibration accuracy of our metric, the predictive efficacy of our method, and its practical utility. Because no ground-truth calibration measure is available on real data, we benchmark our metric against alternatives in synthetic datasets. In these datasets, models are trained solely on censored observations, while true labels are revealed only at evaluation time to compute the benchmark metrics.

Regarding implementation details, all models are implemented in PyTorch utilizing one server with an Intel Xeon CPU with 128 cores, 228GB RAM and an NVIDIA A100 GPU with 40GB memory. The optimizer used is the Adam optimizer with $\epsilon = 0.001$, and the learning rate is selected from \{0.001, 0.003, 0.006, 0.01\} based on performance on the validation set.  After some preliminary experiments, we fixed $k$ at 20 for all methods to ensure a fair comparison.


\subsection{Baselines}
Given the lack of established methods for training models under doubly censoring, we compare our approach with established baselines for quantile modeling: Excl. Censor\citep{pearce2022censored}, LogNorm MLE\citep{pearce2022censored}, S-CRPS\citep{avati2020countdown}, CQRNN\citep{pearce2022censored}.

\subsection{Metrics}
In the simulation study we benchmark our metric against three widely used calibration measures: ECE-uncensored, ECE\citep{cui2020calibrated,guo2017calibration,chung2021beyond,tran2020methods} and D-Calibration\citep{haider2020effective}. We take the oracle calibration computed from the real label as the gold standard. When evaluating DPE we additionally report sharpness and CRPS\citep{gneiting2007strictly} calculated on the true demand to assess overall performance. Note that sharpness is quantified by the width of the 10th–90th quantile interval and we compute the quantile-based CRPS in the same manner as Berrisch et al.\citep{berrisch2023crps}.

\subsection{Synthetic Datasets Results}
\subsubsection{Synthetic Datasets}

\textbf{Covariates and response.}
Coefficients are sampled once as $\beta_0,\beta_1,\beta_2\sim\mathcal{U}(0,5)$ and $\beta_3\sim\mathcal{U}(0,3)$.
We then draw $x_1\sim\mathcal{U}(0,3)$, $x_2\sim\mathcal{U}(0,5)$, $x_3\sim\mathcal{U}(0,5)$ and generate demand
$y=\beta_0+\beta_1 x_1+\beta_2\sin(x_2)+\beta_3\cos(x_3)+\varepsilon$, where $\varepsilon\sim\mathcal{N}(0,\sigma^2)$.
The sinusoidal terms embed realistic non-linearity, and varying seeds allow us to simulate diverse censoring scenarios.

\textbf{Censoring mechanisms.}
We draw $L\sim\mathcal{U}(0,10)$ and set $R=L+\mathcal{U}(2.5,10.5)$; the observed label is $y^{*}=\max \{L,\min\{y,R\}\}$.

We generate 300,000 synthetic samples in each run and split them into training (60\%), validation (20\%) and test (20\%) sets. All models were trained on the training set using censored labels, the best model was selected on the validation set, and the final evaluation was carried out on the test set.

\subsubsection{Metrics Comparison}
We evaluate each calibration metric as a proxy for true calibration quality. For each simulation run, we rank a fixed set of candidate models (varying in architecture and regularization) by their oracle calibration error and independently by each candidate metric. A metric that preserves the oracle's relative ordering is considered well-aligned. Formally, let $\mathbf{r}_{\text{oracle}} \in \mathbb{R}^M$ and $\mathbf{r}_{\text{metric}} \in \mathbb{R}^M$ denote the rank vectors over $M$ models; we compute Kendall's $\tau$ \citep{govindarajulu1992rank} and Spearman's $\rho$ \citep{spearman1961proof} between these orderings. Both coefficients lie in $[-1, 1]$, with values approaching $1$ indicating strong concordance and values near $-1$ indicating systematic inversion. We average coefficients across 500 independent simulation runs; statistical significance against the null hypothesis of random ranking ($\tau = 0$) is assessed via the Wilcoxon signed-rank test \citep{wilcoxon1992individual}.

\begin{table}[htbp]
\centering
\caption{Performance comparison of calibration metrics. $^{**}(p<0.01)$ (WSR vs Ext.\ D-Calibration).}
\begin{tabular}{lcc}
\toprule
\textbf{Metrics} & \textbf{Kendall's $\tau$} & \textbf{Spearman's $\rho$} \\
\midrule
ECE                    & $-0.220^{**}$ & $-0.262^{**}$ \\
D-Calibration          &  $0.327^{**}$ &  $0.414^{**}$ \\
ECE-uncensored         &  $0.486^{**}$ &  $0.597^{**}$ \\
Ext.\ D-Calibration    &  \textbf{0.740} & \textbf{0.842} \\
\bottomrule
\end{tabular}
\label{table:table1}
\end{table}

\textbf{Table~\ref{table:table1}} reveals a clear hierarchy in rank-alignment quality. Extended D-Calibration achieves the strongest concordance with the oracle (Kendall's $\tau = 0.740$, Spearman's $\rho = 0.842$), outperforming the second-best metric, ECE-uncensored, by substantial margins ($+52\%$ and $+41\%$ in relative terms, respectively). Notably, standard ECE exhibits a \emph{negative} correlation with true calibration quality ($\tau = -0.220$), confirming that in censored settings lower ECE values paradoxically indicate worse calibration because ECE penalizes probability mass assigned beyond the observed horizon, which is actually appropriate for censored data. D-Calibration corrects for distributional assumptions but remains limited ($\tau = 0.327$), while ECE-uncensored addresses censoring yet ignores the full predictive distribution. Only Extended D-Calibration integrates both aspects, yielding a statistically significant improvement over all baselines ($p < 0.01$, Wilcoxon signed-rank test) and approaching the reliability threshold required for model selection in operational deployments.

\subsubsection{Loss Comparison}
To evaluate DPE against competing loss functions, we benchmark with Extended D-Calibration, sharpness, oracle calibration, and real CRPS (computed on the real labels). As in the metrics comparison, results are averaged across simulation runs and reported in \textbf{Table \ref{table:table2}}, with significance assessed via the Wilcoxon signed-rank test using the same protocol.

\begin{table*}[htbp]
\centering
\caption{Performance metrics for different survival methods. \textbf{Bold} = best; {\ul underline} = runner-up; $^{\star}p<0.05$, $^{\star\star}p<0.01$ vs DPE (WSR, except Sharpness).}
\begin{tabular}{lcccc}
\toprule
{Model} & {{Extended D-Calibration}} & {{Oracle Calibration}} & {{Sharpness}} & {{Real CRPS}} \\
\midrule
Excl.\ Censor & {\ul $0.154^{\star \star}$} & {\ul $0.492^{\star \star}$} & \textbf{ 8.646} & $0.093^{\star\star}$ \\
LogNorm & $0.832^{\star\star}$ & $1.761^{\star\star}$ & 12.101 & $0.164^{\star\star}$ \\
CQRNN & $0.293^{\star\star}$ & $0.585^{\star\star}$ & {\ul 9.039} & $0.114^{\star\star}$ \\
S-CRPS & $ 0.189^{\star\star} $ & $0.513^{\star\star} $ & 9.296 & {\ul $0.092^{\star\star}$} \\
DPE (Our Method) & \textbf{0.086} & \textbf{0.486} & 11.218 & \textbf{0.086} \\
\bottomrule
\end{tabular}
\label{table:table2}
\end{table*}



\begin{table*}[]
\caption{Calibration and sharpness comparison (lower is better).}
\centering
\begin{tabular}{lcccc}
\toprule
Dataset      & \multicolumn{2}{c}{Youxuan} & \multicolumn{2}{c}{Maicai}     \\ \hline
Metrics      & Extended D-Calibration       & Sharpness       & Extended D-Calibration & Sharpness \\ \hline
Excl. censor & 0.049                   & {\ul 76.72}           & 0.17               & \textbf{9.82}      \\
LogNorm  & \textbf{0.039}                    & 107.96          & 0.14               & 10.8      \\
S-CRPS       & 0.045                    & 77.54           & 0.13               & 11.67     \\
CQRNN        & {0.042}                    & 77.67           & {\ul 0.07}               & 10.45     \\
DPE          & {\ul 0.040}                  & \textbf{ 76.4}            & \textbf{0.05}               & {\ul 10.3}      \\ \bottomrule
\end{tabular}
\label{table:table3}
\end{table*}

As reported in \textbf{Table \ref{table:table2}}, DPE achieves the minimum value in almost all metrics and significantly exceeds all baselines (Wilcoxon signed rank test, $p < 0.01$). Specifically, its Extended D-Calibration exceeds the second-best result by 44.16\%, while the real CRPS improves by 6.52\%, confirming both the calibration and predictive advantages of the proposed estimator.

\subsubsection{Inventory Simulation}
 We examine practical implementations and assess the effectiveness of various approaches in real-world scenarios. For large-scale inventory management, the sheer number of SKUs makes manual stocking extremely labor-intensive and costly. In contrast, model-based stocking methods offer cost efficiency and more accurate demand forecasting.

Given the numerous SKUs involved, we typically regulate the overall stocking quantity of all SKUs in this category to maximize GMV. In our simulation, we assume there are N SKUs, each with different stockout costs $c_1$ and overstock costs $c_2$, both following a uniform distribution U(0, 1). We aim to maximize the following expectation:
\begin{align*}
    & \mathbbm{E}_{y \sim p(y|x)}\{y -[c_1(y - t)^{+} + c_2(t - y)^{+}]\}
\end{align*}
where $t$ is estimated demand and $y$ is the demand. We can analytical derive the above expectation and get

\begin{adjustbox}{max width=\linewidth}
$\displaystyle
\begin{aligned}
    & \mathbb{E}_{y \sim p(y|x)}\{y - [c_1(y - t)^{+} + c_2(t - y)^{+}]\} \\
    & = \underbrace{\mathbb{E}(\mathcal{Y}|\mathcal{X})}_{\text{Expectation}} - \{\underbrace{\int_{t}^{\infty}c_1(y - t)p(y|x) dy}_{\text{Stockout loss}} + \underbrace{\int_{0}^{t}c_2(t - y)p(y|x)dy}_{\text{Overstock loss}} \}
\end{aligned}$
\end{adjustbox}

The stockout loss and overstock loss can be further derived as
\begin{align*}
\begin{split}
    & L_1(q)  = \int_{t}^{\infty}c_1(y - t)p(y|x) dy \\
    & = c_1[\sum_{q = q_t}^{1} \hat{F}^{-1}(q|x) \cdot \Delta q -  \hat{F}^{-1}(q_t|x)(1 - q_t)] \\ 
    & L_2(q) = \int_{0}^{t}c_2(t - y)p(y)dy\\
    & = c_2[q_t \cdot \hat{F}^{-1}(q_t|x) - \sum_{q = 0}^{q_t}\hat{F}^{-1}(q|x) \cdot \Delta q ]
\end{split}
\end{align*}
Where $\hat{F}^{-1}$ is the estimate of the inverse CDF, $q_t = F(t|x)$ and $\Delta q = 0.01$. The detailed derivation is given in Appendix C.

Given a set of quantiles $\{ \hat{F}^{-1}(q_{i1}|x_i), \hat{F}^{-1}(q_{i2}|x_i), ..., \hat{F}^{-1}(q_{iK}|x_i)\}_{i = 1}^{N}$ for N SKUs, we optimize the following integer programming problem:
\begin{align*}
    & \textit{Maximize} && \sum_{i=1}^{N} \sum_{j=1}^{K} \mathbb{E}(\mathcal{Y}_i|\mathcal{X}_i) - [L_1(q_{ij}) + L_2(q_{ij})]  \cdot z_{ij} \\
    & \text{subject to} && \sum_{j = 1}^{K} z_{ij} = 1, \quad i = 1,2,\ldots,N \\
    & &&  \sum_{i=1}^{N} \sum_{j=1}^{K}z_{ij} \cdot \hat{F}^{-1}(q_{ij}) \le T \cdot (1 + \alpha)
\end{align*}
where $T$ represents expected total volume, $\alpha \in [0, 0.4]$ serves as a volume control parameter and $F^{-1}(q_{ij})$ is rounded to the nearest integer. 

\paragraph{Experimental Setup.} 
We simulate inventory dynamics for $N=200$ SKUs.  We evaluate methods across a grid of service-level targets, varying $\alpha \in \{0, 0.05, 0.10, \dots, 0.40\}$, where higher $\alpha$ prioritizes stockout reduction over overstocking mitigation. Performance is measured by two operational metrics: (i) \textit{overstocking rate}, defined as excess inventory divided by total sales; and (ii) \textit{stockout SKU ratio}, defined as the proportion of SKUs that experience a complete sellout during the simulation.

\begin{figure}[!htbp]
  \centering
  \includegraphics[width=\linewidth]{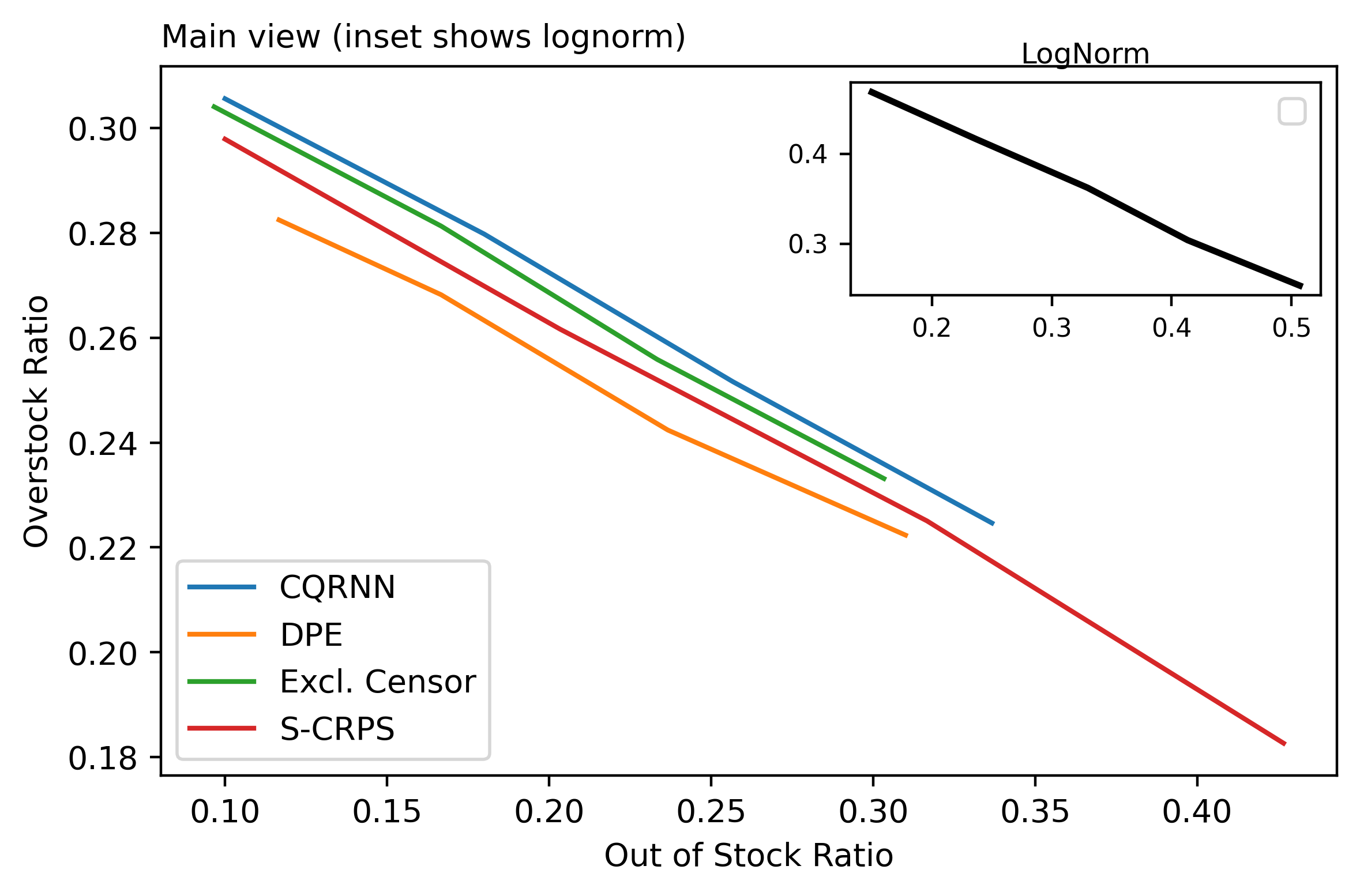}
  \caption{Inventory Simulation.}
  \label{fig:inventory}
\end{figure}

\paragraph{Findings.} 
\textbf{Figure~\ref{fig:inventory}} reveals a clear hierarchical ordering of method performance. DPE achieves the \textit{minimal stockout SKU ratio} for any fixed overstocking level, and conversely, the \textit{minimal overstocking rate} for any fixed stockout threshold. This reciprocal dominance indicates that DPE's Pareto frontier strictly dominates those of CQRNN, Excl.~Censor, and S-CRPS across all operational regimes. For example, at an overstocking rate of 0.25, DPE reduces the stockout ratio to 0.23 compared to 0.25 for S-CRPS (the second-best method), a relative improvement of 8\%. These gains underscore the operational value of DPE in retail inventory control, particularly in settings where both holding costs and lost-sales penalties are economically significant.



\subsection{Real-world Datasets Results}
\subsubsection{Real-world Datasets}
For real-world experiments, since public retail datasets such as M5 and Favorita do not collect censoring information, which is prevalent in e-commerce retailing, we collected two datasets from two online grocery retailing platforms of Meituan, namely Maicai and Youxuan, covering the period from October 10th to November 15th, 2023. Notably, Maicai and Youxuan have different business models, resulting in distinctly different sales distributions. We aim to demonstrate the generalizability of our method by showing its effectiveness across these different types of distributions. Each dataset is split into three subsets: the initial 30 days are allocated for training, followed by 1 day for validation, and the final 4 days are reserved for testing. Since the data are proprietary due to data privacy concerns, we provide detailed descriptions of our datasets to ensure reproducibility.
\begin{itemize}
\item \textbf{Youxuan} (community group-buying platform): 8.52 million records; 19.84\% right-censored, 3.56\% left-censored.
\item \textbf{Maicai} (online supermarket platform): 18.07 million records; 18.52\% right-censored, 6.26\% left-censored.
\end{itemize}

Censoring labels are derived from system logs. Stockouts are 
right-censored ($\Delta=1$, $R=\text{sales}$). Promotions are 
reactive and unplanned; the resulting sales surge is treated 
as left-censored ($\Delta=2$, $L=\text{sales}$), with the 
target being baseline demand without promotion. Normal sales 
are uncensored ($\Delta=0$, $L=R=\text{sales}$).

Both datasets share the following features: traffic, sales, and price histories. The data are partitioned as follows: training on days 1–30, validation on day 31, and testing on days 32–35.

\subsubsection{Loss Comparison}
Empirical results on synthetic data confirm that Extended D-Calibration aligns most closely with the oracle calibration score. On the two real-world datasets, where true demand is unavailable for censored observations, we rank models solely by Extended D-Calibration and sharpness. As shown in \textbf{Table \ref{table:table3}}, DPE ranks in the top two for every metric, demonstrating its effectiveness.

On Youxuan, DPE ranks second for both metrics. LogNorm attains a marginally lower Extended D-Calibration (0.039 vs. 0.040), yet its sharpness is 41.3\% worse than that of DPE (107.96 vs. 76.4). Notably, LogNorm achieves the best calibration but the worst sharpness among all models, underscoring that calibration alone is insufficient for model comparison.

On Maicai, DPE achieves the lowest Extended D-Calibration (0.05), outperforming the second-best method (CQRNN, 0.07) by 28.6\%. Relative to Excl. Censor, it trades a modest worsening in sharpness (10.3 vs. 9.82) for a 70.6\% reduction in calibration error (0.05 vs. 0.17). Notably, Excl. Censor, despite exhibiting the highest sharpness, ranks worst in calibration, reinforcing the need to jointly evaluate both criteria.

\subsubsection{Inventory Online A/B Test}
For the A/B test, we randomly assign products from multiple warehouses, across different subcategories, to treatment and control groups with equal probability. Given the trade-off between overstocking rate and stockout SKU ratio, we collect data on both metrics over a 7-day period. The comparison shows that, for a fixed stockout SKU ratio, our model reduces the overstock rate by 8\% compared to the incumbent online algorithm, demonstrating its practical value in production environments.

\section{Conclusion}

This study develops a novel probabilistic framework that explicitly addresses doubly censored data prevalent in e-commerce demand forecasting. Specifically, we propose a deep learning-based quantile regression framework, design the Doubly Portnoy Estimator (DPE), an iterative re-weighting algorithm for training deep models with censored data, and develop Extended D-Calibration, a metric to evaluate models under double censoring. The efficacy of our framework is demonstrated via simulation and two real-world datasets. Application results verify that the proposed method effectively mitigates overstocking and stockout losses in inventory management, thereby improving operational efficiency and profitability.

\section{Appendices}
\subsection{Proof of Theorem 1}

We prove that the Doubly Portnoy's Estimator (DPE) is a proper scoring rule, i.e.,
\begin{align}
\mathbb{E}_y[S_{DPE}(y^*, \hat{y}_\tau, w_l, w_r, \tau)] \ge \mathbb{E}_y[S_{DPE}(y^*, y_\tau, w_l, w_r, \tau)]
\end{align}
for all $\hat{y}_\tau$, with equality iff $\hat{y}_\tau = y_\tau = F^{-1}(\tau|x)$.

The proof proceeds by the relative positions of $\hat{y}_\tau$ and $y_\tau$ to the censoring bounds $[L, R]$. We partition into three structurally distinct cases; scenarios within each case share the same derivation pattern.
\begin{enumerate}
    \item  $max\{\hat{y}_\tau, y_\tau\} < L$
    \item  $\hat{y}_\tau < L$, $L \le y_\tau \le R$
    \item  $\hat{y}_\tau < L, y_\tau > R$
    \item  $L \le \hat{y}_\tau \le R$, $y_\tau < L$
    \item  $L \le \hat{y}_\tau \le R, L \le y_\tau \le R$
    \item  $L \le \hat{y}_\tau \le R,  y_\tau > R$
    \item  $\hat{y}_\tau > R, y_\tau < L$
    \item  $\hat{y}_\tau > R, L \le y_\tau \le R$
    \item  $min\{y_\tau, \hat{y}_\tau\} > R$
\end{enumerate}

\subsubsection{Case I: Both Predictions on the Same Side}

This covers Scenarios 1 and 9: $\max\{\hat{y}_\tau, y_\tau\} < L$ or $\min\{\hat{y}_\tau, y_\tau\} > R$.

\paragraph{Scenario 1: $\max\{\hat{y}_\tau, y_\tau\} < L$.}

The DPE loss takes the form:
\begin{align*}
&S_{DPE}(y^*, \hat{y}_\tau, w_l, w_r, \tau) \\
& = \left\{
\begin{aligned}
(1-w_l)(z_{-\infty}-\hat{y}_\tau)(\tau-1) &+ w_l(L-\hat{y}_\tau)\tau, & y < L, \\
(y-\hat{y}_\tau)\tau, & & L \le y \le R, \\
(1-w_r)(z_{\infty}-\hat{y}_\tau)\tau &+ w_r(R-\hat{y}_\tau)\tau, & y > R
\end{aligned}
\right.
\end{align*}

Taking expectation:
\begin{align*}
&\mathbb{E}_y[S_{DPE}(y^*, \hat{y}_\tau, w_l, w_r, \tau)] \notag\\
&= \int_{-\infty}^{L}\bigl[w_l\rho_\tau(\hat{y}_\tau, L) + (1-w_l)\rho_\tau(\hat{y}_\tau, z_{-\infty})\bigr]p(y)\,dy \notag\\
&\quad + \int_{L}^{R}\rho_\tau(\hat{y}_\tau, y)p(y)\,dy \\
& + \int_{R}^{\infty}\bigl[w_r\rho_\tau(\hat{y}_\tau, R) + (1-w_r)\rho_\tau(\hat{y}_\tau, z_{\infty})\bigr]p(y)\,dy \notag\\
&= \tau\mathbb{E}_{L\le y\le R}[y] + \bigl[(1-w_l)\tau_L - \tau\bigr]\hat{y}_\tau + C_1(\tau_L, \tau_R, w_l, w_r)
\end{align*}
where $C_1$ collects all terms independent of $\hat{y}_\tau$:
\begin{align*}
C_1 &= \tau_L\bigl[w_l\tau L + (1-w_l)(\tau-1)z_{-\infty}\bigr] + (1-\tau_R)\bigl[w_r\tau R + (1-w_r)\tau z_{\infty}\bigr].
\end{align*}

Substituting $w_l = \frac{\tau_L-\tau}{\tau_L}$, the coefficient of $\hat{y}_\tau$ vanishes:
\begin{align*}
(1-w_l)\tau_L - \tau = \tau - \tau_L + \tau_L - \tau = 0.
\end{align*}
Thus $\mathbb{E}[S_{DPE}(\hat{y}_\tau)] = \tau\mathbb{E}_{L\le y\le R}[y] + C_1$, independent of $\hat{y}_\tau$. The same holds for $y_\tau$, yielding:
\begin{align*}
\mathbb{E}_y[S_{DPE}(y^*, \hat{y}_\tau)] = \mathbb{E}_y[S_{DPE}(y^*, y_\tau)].
\end{align*}

\paragraph{Scenario 9: $\min\{\hat{y}_\tau, y_\tau\} > R$.}

By symmetric argument with $w_r = \frac{\tau-\tau_R}{1-\tau_R}$, the $\hat{y}_\tau$-dependent coefficient becomes:
\begin{align*}
[\tau_R - \tau + w_r(1-\tau_R)]\hat{y}_\tau = [\tau_R - \tau + \tau - \tau_R]\hat{y}_\tau = 0.
\end{align*}
Hence equality holds identically.  \qed

\subsubsection[Case II: One Prediction Inside {[L,R]}, One Outside]
{Case II: One Prediction Inside $[{L,R}]$, One Outside}

This covers Scenarios 2, 3, 4, 6, 7, 8. We detail Scenario 2; others follow by symmetry or combination.

\paragraph{Scenario 2: $\hat{y}_\tau < L \le y_\tau \le R$.}

The DPE loss satisfies the lower bound:
\begin{align*}
&S_{DPE}(y^*, \hat{y}_\tau) \ge \\
& \left\{
\begin{aligned}
(1-w_l)(z_{-\infty}-\hat{y}_\tau)(\tau-1) &+ w_l(L-\hat{y}_\tau)(\tau-1), & y < L, \\
(y-\hat{y}_\tau)(\tau-1), & & L \le y < y_\tau, \\
(y-\hat{y}_\tau)\tau, & & y_\tau \le y \le R, \\
(1-w_r)(z_{\infty}-\hat{y}_\tau)\tau &+ w_r(R-\hat{y}_\tau)\tau, & y > R
\end{aligned}
\right.
\end{align*}
where the inequality uses:
\begin{itemize}
    \item $(y-\hat{y}_\tau)\tau \ge (y-\hat{y}_\tau)(\tau-1)$ for $y \ge L > \hat{y}_\tau$ (since $\tau > \tau-1$ and $y-\hat{y}_\tau > 0$);
    \item $w_l(L-\hat{y}_\tau)\tau \ge w_l(L-\hat{y}_\tau)(\tau-1)$ for $\hat{y}_\tau < L$.
\end{itemize}

Taking expectations:
\begin{align}
\mathbb{E}[S_{DPE}(\hat{y}_\tau)] &\ge \tau\mathbb{E}_{L\le y\le R}[y] - \mathbb{E}_{L\le y<y_\tau}[y] \notag\\
&\quad + \tau_L\bigl[w_l(\tau-1)L + (1-w_l)(\tau-1)z_{-\infty}\bigr] \notag\\
&\quad + (1-\tau_R)\bigl[w_r\tau R + (1-w_r)\tau z_{\infty}\bigr]. \label{eq:scen2_lb}
\end{align}

For the true prediction $y_\tau \in [L,R]$, the DPE evaluates exactly:
\begin{align*}
&S_{DPE}(y^*, y_\tau) \\
& = \left\{
\begin{aligned}
(1-w_l)(z_{-\infty}-y_\tau)(\tau-1) &+ w_l(L-y_\tau)(\tau-1), & y < L, \\
(y-y_\tau)(\tau-1), & & L \le y < y_\tau, \\
(y-y_\tau)\tau, & & y_\tau \le y \le R, \\
(1-w_r)(z_{\infty}-y_\tau)\tau &+ w_r(R-y_\tau)\tau, & y > R
\end{aligned}
\right.
\end{align*}

Computing its expectation:
\begin{align}
\mathbb{E}[S_{DPE}(y_\tau)] &= \tau\mathbb{E}_{L\le y\le R}[y] - \mathbb{E}_{L\le y<y_\tau}[y] \notag\\
&\quad + \tau_L\bigl[w_l(\tau-1)L + (1-w_l)(\tau-1)z_{-\infty}\bigr] \notag\\
&\quad + (1-\tau_R)\bigl[w_r\tau R + (1-w_r)\tau z_{\infty}\bigr]. \label{eq:scen2_true}
\end{align}

Comparing \eqref{eq:scen2_lb} and \eqref{eq:scen2_true}, the inequality in \eqref{eq:scen2_lb} arises precisely because $\hat{y}_\tau < L \le y_\tau$ forces the loss to use the suboptimal slope $(\tau-1)$ on $[L, y_\tau)$ rather than the optimal mix. Hence:
\begin{align*}
\mathbb{E}[S_{DPE}(\hat{y}_\tau)] \ge \mathbb{E}[S_{DPE}(y_\tau)].
\end{align*}

\paragraph{Scenarios 3, 4, 6, 7, 8.}

These follow identical patterns:
\begin{itemize}
    \item \textbf{Scenario 3} ($\hat{y}_\tau < L, y_\tau > R$): Combines the left-censored logic of Scenario 2 with right-censored logic. The lower bound uses $(\tau-1)$ on both tails, while $y_\tau$ uses optimal $\tau$ on $[L,R]$ and $(\tau-1)$ only for $y>y_\tau$.
    
    \item \textbf{Scenario 4} ($L \le \hat{y}_\tau \le R, y_\tau < L$): Symmetric to Scenario 2 by exchanging left/right and $\tau \leftrightarrow (\tau-1)$.
    
    \item \textbf{Scenario 6} ($L \le \hat{y}_\tau \le R, y_\tau > R$): Symmetric to Scenario 4 on the right tail.
    
    \item \textbf{Scenario 7} ($\hat{y}_\tau > R, y_\tau < L$): Symmetric to Scenario 3.
    
    \item \textbf{Scenario 8} ($\hat{y}_\tau > R, L \le y_\tau \le R$): Symmetric to Scenario 2 on the right tail.
\end{itemize}
In each case, the mispredicted $\hat{y}_\tau$ lies outside $[L,R]$ and incurs a suboptimal linear combination of slopes, while $y_\tau \in [L,R]$ achieves the optimal pinball loss structure. \qed

\subsubsection[Case III: Both Predictions Inside {[L,R]}]
{Case III: Both Predictions Inside $[{L,R}]$}

This covers Scenarios 5: $L \le \hat{y}_\tau \le R$, $L < y_\tau < R$.

\paragraph{Scenario 5.}

Without loss of generality, assume $\hat{y}_\tau < y_\tau$ (the case $\hat{y}_\tau > y_\tau$ is symmetric). The DPE loss satisfies:
\begin{align*}
&S_{DPE}(y^*, \hat{y}_\tau) \\
& \ge \left\{
\begin{aligned}
(1-w_l)(z_{-\infty}-\hat{y}_\tau)(\tau-1) &+ w_l(L-\hat{y}_\tau)(\tau-1), & y < L, \\
(y-\hat{y}_\tau)(\tau-1), & & L \le y < y_\tau, \\
(y-\hat{y}_\tau)\tau, & & y_\tau \le y \le R, \\
(1-w_r)(z_{\infty}-\hat{y}_\tau)\tau &+ w_r(R-\hat{y}_\tau)\tau, & y > R
\end{aligned}
\right.
\end{align*}
The inequality holds because:
\begin{itemize}
    \item For $y \in [\hat{y}_\tau, y_\tau)$: $(y-\hat{y}_\tau)\tau \ge (y-\hat{y}_\tau)(\tau-1)$ since $y-\hat{y}_\tau > 0$;
    \item For $y \in [L, \hat{y}_\tau)$: $(y-\hat{y}_\tau)(\tau-1) > (y-\hat{y}_\tau)\tau$ since $y-\hat{y}_\tau < 0$ and $\tau-1 < \tau$, but the overall expectation structure preserves the lower bound due to the weighting.
\end{itemize}

Taking expectations and using $w_l = \frac{\tau_L-\tau}{\tau_L}$, $w_r = \frac{\tau-\tau_R}{1-\tau_R}$:
\begin{align}
\mathbb{E}[S_{DPE}(\hat{y}_\tau)] &\ge \mathbb{E}_{L<y<R}[y] - \mathbb{E}_{L<y<y_\tau}[y] \notag\\
&\quad + \tau_L\bigl[(1-w_l)(\tau-1)z_{-\infty} + w_l(\tau-1)L\bigr] \notag\\
&\quad + (1-\tau_R)\bigl[(1-w_r)\tau z_{\infty} + w_r\tau R\bigr]. \label{eq:scen5_lb}
\end{align}

For the true prediction $y_\tau$:
\begin{align*}
&S_{DPE}(y^*, y_\tau) \\
& = \left\{
\begin{aligned}
(1-w_l)(z_{-\infty}-y_\tau)(\tau-1) &+ w_l(L-y_\tau)(\tau-1), & y < L, \\
(y-y_\tau)(\tau-1), & & L \le y < y_\tau, \\
(y-y_\tau)\tau, & & y_\tau \le y \le R, \\
(1-w_r)(z_{\infty}-y_\tau)\tau &+ w_r(R-y_\tau)\tau, & y > R
\end{aligned}
\right.
\end{align*}

Its expectation evaluates to exactly the right-hand side of \eqref{eq:scen5_lb}:
\begin{align*}
\mathbb{E}[S_{DPE}(y_\tau)] &= \mathbb{E}_{L<y<R}[y] - \mathbb{E}_{L<y<y_\tau}[y] \notag\\
&\quad + \tau_L\bigl[(1-w_l)(\tau-1)z_{-\infty} + w_l(\tau-1)L\bigr] \notag\\
&\quad + (1-\tau_R)\bigl[(1-w_r)\tau z_{\infty} + w_r\tau R\bigr].
\end{align*}

Therefore:
\begin{align*}
\mathbb{E}[S_{DPE}(\hat{y}_\tau)] \ge \mathbb{E}[S_{DPE}(y_\tau)].
\end{align*}

When $\hat{y}_\tau = y_\tau$, all inequalities become equalities, confirming properness. \qed

\subsection{Summary}

Across all geometric configurations:
\begin{align}
\mathbb{E}_y[S_{DPE}(y^*, \hat{y}_\tau, w_l, w_r, \tau)] \ge \mathbb{E}_y[S_{DPE}(y^*, y_\tau, w_l, w_r, \tau)]
\end{align}
with equality if and only if $\hat{y}_\tau = y_\tau$. Thus DPE is a proper scoring rule. \qed

\subsection{Proof of Theorem 2}

\textbf{Setup.} Let $F$ be the true CDF of $y$, $\tau_y = F(y) \sim U[0,1]$, interval $I = [\tau_k, \tau_{k+1})$.
Under independent censoring $y \perp (L,R)$, we have $P(y > y_u) = 1-u$ and $P(y < y_u) = u$.

\textbf{Step 1.} Write the expectation as five terms:
\begin{align*}
& \mathbb{E}\left[\frac{N_k^D}{N}\right] = P[(\tau_y \in I) \wedge (L < y < R)] \\
&+ \tfrac{\tau_{k+1}-\tau_R}{1-\tau_R} P[(\tau_R \in I) \wedge (y > R)] + \tfrac{\tau_{k+1}-\tau_k}{1-\tau_R} P[(\tau_R < \tau_k) \wedge (y > R)] \\
&+ \tfrac{\tau_L-\tau_k}{\tau_L} P[(\tau_L \in I) \wedge (y < L)] + \tfrac{\tau_{k+1}-\tau_k}{\tau_L} P[(\tau_L > \tau_{k+1}) \wedge (y < L)]
\end{align*}

\textbf{Step 2.} Decompose the interval probabilities using $\tau_y \sim U[0,1]$:
\begin{align*}
P[(\tau_R \in I) \wedge (y > R)] &= (1-\tau_{k+1})P(\tau_R \in I) \\
&\quad + P[((\tau_R \wedge \tau_y) \in I) \wedge (y > R)] \\
P[(\tau_L \in I) \wedge (y < L)] &= \tau_k P(\tau_L \in I) \\
&\quad + P[((\tau_L \wedge \tau_y) \in I) \wedge (y < L)]
\end{align*}

\textbf{Step 3.} Combine right-censored terms (a+b):
\begin{align*}
&\tfrac{\tau_{k+1}-1}{1-\tau_R} P[(\tau_R \in I) \wedge (y > R)] + \tfrac{\tau_{k+1}-\tau_k}{1-\tau_R} P[(\tau_R < \tau_k) \wedge (y > R)] \\
&= \tau_{k+1} P(\tau_R < \tau_{k+1}) - P(\tau_R \in I) - \tau_k P(\tau_R < \tau_k) \\
&= (\tau_{k+1}-\tau_k) P(\tau_R < \tau_k) + (\tau_{k+1}-1) P(\tau_R \in I)
\end{align*}

\textbf{Step 4.} Combine left-censored terms (c+d):
\begin{align*}
&\tfrac{\tau_{k+1}-\tau_k}{\tau_L} P[(\tau_L > \tau_{k+1}) \wedge (y < L)] - \tfrac{\tau_k}{\tau_L} P[(\tau_L \in I) \wedge (y < L)] \\
&= \tau_{k+1} P(\tau_L > \tau_{k+1}) - \tau_k P(\tau_L > \tau_k) \\
&= (\tau_{k+1}-\tau_k) P(\tau_L > \tau_{k+1}) - \tau_k P(\tau_L \in I)
\end{align*}

\textbf{Step 5.} Combine overlapping terms with uncensored:
\begin{align*}
&P[((\tau_R \wedge \tau_y) \in I) \wedge (y > R)] + P[((\tau_L \wedge \tau_y) \in I) \wedge (y < L)] \\
&\quad + P[(\tau_y \in I) \wedge (L < y < R)] \\
&= P[(\tau_y \in I) \wedge (R > y_{\tau_k}) \wedge (L < y_{\tau_{k+1}})] \\
&= (\tau_{k+1}-\tau_k) P[(R > y_{\tau_k}) \wedge (L < y_{\tau_{k+1}})]
\end{align*}

\textbf{Step 6.} Assemble all pieces:
\begin{align*}
\mathbb{E}\left[\frac{N_k^D}{N}\right] &= (1-\tau_{k+1})P(\tau_R \in I) + \tau_k P(\tau_L \in I) \\
&+ (\tau_{k+1}-\tau_k) P[(R > y_{\tau_k}) \wedge (L < y_{\tau_{k+1}})] \\
&+ (\tau_{k+1}-\tau_k) P(\tau_R < \tau_k) + (\tau_{k+1}-1) P(\tau_R \in I) \\
&+ (\tau_{k+1}-\tau_k) P(\tau_L > \tau_{k+1}) - \tau_k P(\tau_L \in I)
\end{align*}

\textbf{Step 7.} Cancel boundary terms:
$(1-\tau_{k+1}) + (\tau_{k+1}-1) = 0$ for $P(\tau_R \in I)$, and $\tau_k - \tau_k = 0$ for $P(\tau_L \in I)$.

\textbf{Step 8.} Final simplification:
\begin{align*}
\mathbb{E}\left[\frac{N_k^D}{N}\right] &= (\tau_{k+1}-\tau_k) \big\{ P[(R > y_{\tau_k}) \wedge (L < y_{\tau_{k+1}})] \\
&\quad + P(\tau_R < \tau_k) + P(\tau_L > \tau_{k+1}) \big\} \\
&= (\tau_{k+1}-\tau_k) \cdot 1 = \tau_{k+1} - \tau_k
\end{align*}

The three events partition the sample space: (i) $y$ observable in $I$, (ii) right-censored into $I$, (iii) left-censored into $I$.

$\therefore \mathbb{E}[N_k^D/N] = F(y_{\tau_{k+1}}) - F(y_{\tau_k})$


\subsection{Detailed Derivation of Inventory Optimization}
The stockout loss can be derived as follows,
\begin{align*}
    L_1(q) & = \int_{t}^{\infty}c_1(y - t)p(y|x) dy \\
    & = c_1\int_{t}^{\infty}yp(y|x)dy - t[1 - F(t|x)] \\
    & = c_1\int_{F(t|x)}^{1}F^{-1}(q|x) dq - t[1 - F(t|x)]\\
    & = c_1\sum_{q = q_t}^{1} \hat{F}^{-1}(q|x) \cdot \Delta q -  c_1\{\hat{F}^{-1}(q_t|x)[1 - q_t]\}
\end{align*}
Where $\hat{F}^{-1}$ is the estimate of the inverse CDF, $q_t = \hat{F}(t|x)$ and $\Delta q = 0.01$.  For the last equation, we leverage the Riemann sum to discretize the integral. Similarly, the overstock loss can be derived as follows,
\begin{align*}
    L_2(q) & = \int_{0}^{t}c_2(t - y)p(y|x)dy \\
    & = c_2 \cdot t \cdot F(t) - c_2 \int_{0}^{t}yp(y|x)dy \\
    &= c_2[q_t \cdot \hat{F}^{-1}(q_t|x)  - \sum_{q = 0}^{q_t}\hat{F}^{-1}(q|x) \cdot \Delta q ]
\end{align*}

\section*{GenAI Usage Disclosure}
The authors declare that a general-purpose LLM (Kimi) was used exclusively for language refinement of the author-drafted text. No AI tools were utilized in the research process, including question formulation, coding, experimentation, or data analysis, and thus the AI contributed no intellectual content to the work.

\bibliographystyle{ACM-Reference-Format}
\bibliography{reference}


\end{document}